\newtheorem{theorem}{Theorem}
\newcommand\lqf{\mathcal{Q}}
\newcommand\lqffn[1]{$\lqf(#1)$}
\title{\LARGE \bf
Model-free Neural Lyapunov Control for Safe Robot Navigation
}
\author{
    Zikang Xiong, 
    Joe  Eappen, 
    Ahmed H. Qureshi, 
    and Suresh Jagannathan
    \thanks{Authors affiliate to Computer Science Department, Purdue University, IN 47906, USA. (E-mail: xiong84@purdue.edu; jeappen@purdue.edu; ahqureshi@purdue.edu; suresh@cs.purdue.edu)}}
\begin{document}

\maketitle
\thispagestyle{empty}
\pagestyle{empty}

\begin{abstract}
    Model-free Deep Reinforcement Learning (DRL) controllers have demonstrated promising results on various challenging non-linear control tasks. While a model-free DRL algorithm can solve unknown dynamics and high-dimensional problems, it lacks safety assurance. Although safety constraints can be encoded as part of a reward function, there still exists a large gap between an RL controller trained with this modified reward and a safe controller. In contrast, instead of implicitly encoding safety constraints with rewards, we explicitly co-learn a Twin Neural Lyapunov Function (TNLF) with the control policy in the DRL training loop and use the learned TNLF to build a runtime monitor. Combined with the path generated from a planner, the monitor chooses appropriate waypoints that guide the learned controller to provide collision-free control trajectories. Our approach inherits the scalability advantages from DRL while enhancing safety guarantees. Our experimental evaluation demonstrates the effectiveness of our approach compared to DRL with augmented rewards and constrained DRL methods over a range of high-dimensional safety-sensitive navigation tasks.
\end{abstract}

\section{Introduction}

Conditioning the goal of a controller with waypoints generated by a planner is a natural approach to combine planning and control~\cite{lavalle2006planning}. A low-level controller guides a robot to follow a path generated by a high-level planner and finally navigates a robot to achieve the desired goal. However, such a decomposition must also take into account the possibility that the controller may not exactly follow the planned path. For example, when an autonomous vehicle needs to make a U-turn, the planned path might be too sharp to follow since the planner is usually agnostic of the underlying controller's capabilities. This raises safety concerns - even if the planner can generate safe plans, it is not always the case that the controller can follow the given path. 

Solving kinodynamic constraints~\cite{donald1993kinodynamic} addresses the inconsistency problem between planning and control of a robot. Recently, the application of DRL in this domain has provided a scalable solution for addressing kinodynamic constraints. \cite{faust2018prm,chiang2019rl,ota2020efficient} train DRL local controllers to reach a waypoint while also avoiding collisions. Since these local controllers can avoid collisions, the safety of these controllers following the planned paths is further improved. These controllers achieve obstacle avoidance by formulating collisions as penalty items in the reward function. Notably, since the avoidance capability of these controllers is only implicitly encoded as part of the reward function, the actual avoidance capability of these controllers remains a question. Moreover, it is generally hard to decide the proportion of these penalty items. When this proportion is too large, the controller is likely to learn to stay in the initial position since this is always safe and can receive a better reward compared with exploring other regions of the space, potentially leading to collisions. If it is too small, the controller will likely ignore these collisions and only focus on achieving the goal. This phenomenon, known as reward hacking, often makes training DRL policy extremely challenging in high-dimensional spaces.

The key idea of our approach is to explicitly place a sequence of robot reachability boundaries around a given plan. As long as we can ensure that no obstacle appears in the reachability boundary, we can provide enhanced safety guarantees when following a given plan. Lyapunov function and its region of attraction are widely used to build reachability boundaries. Combined with Neural Lyapunov Functions, (NLF)~\cite{richards2018lyapunov,chang2020neural,sun2020learning} which has been extensively studied, this approach has the potential to scale such reachability analyses to high-dimensional problems. However, effectively learning and analyzing the NLF is still an open question. In this paper, we firstly propose a co-learning algorithm to build the controller and the NLF in one training loop, which benefits both the controller and NLF. Furthermore, we analyze the NLF with a computationally effective approach. This allows our algorithm to be effectively deployed to realize safety-sensitive navigation tasks. 

\begin{figure}[t]
    \centering
    \includegraphics[width= \linewidth]{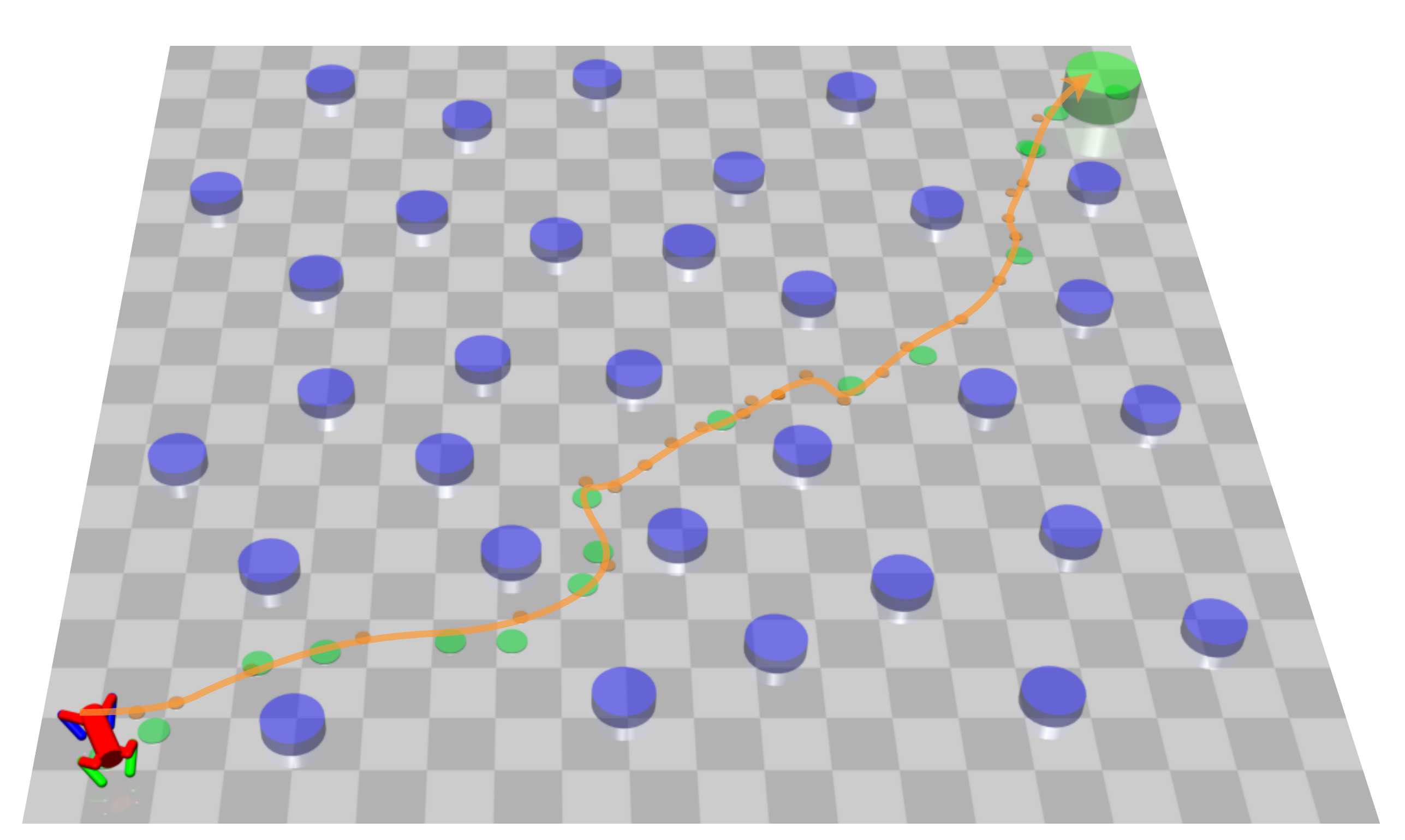}
    \caption{Level-2 Quadruped Navigation Task. Blue cylinders represent hazardous zones, and the small green dots represent the waypoints generated by RRT. The final goal is the large green cylinder. We navigate the quadruped robot to achieve the goal while avoiding any hazardous zones. }
    \label{fig:quadruped-lv-2}
\end{figure}

We evaluate our approach on the Safety Gym suite~\cite{Ray2019} in highly cluttered environments with three levels of obstacles complexity and each with four complex dynamical systems, named \emph{sweeping} (similar to rumba), \emph{point} (a circular rigid-body), \emph{car}, and a 58 DOF \emph{quadruped} (Fig.~\ref{fig:quadruped-lv-2}). These environments are challenging for traditional DRL methods even without safety constraints. However, this paper demonstrates that our approach can scale well to these high-dimensional, cluttered navigation tasks while explicitly incorporating safety constraints. Our framework effectively co-learns a TNLF in the DRL loop and leverages the learned TNLF in a computationally tractable manner. 


Our main contributions are as follows:
\begin{enumerate}
    \item We propose a model-free Lyapunov method that can provide significant safety enhancement for high-dimensional safety-sensitive robot navigation problems with raw sensor observations.
    \item We co-learn a TNLF and controller in a manner that improves controller performance and yields a high-quality Lyapunov function.
    \item We use the learned TNLF to build a computationally effective runtime monitor for heuristically guiding robots under safety constraints at runtime.
    \item We demonstrate that the combined planner and NLF can consistently reach the goal state in challenging safety-sensitive navigation tasks while also providing significantly fewer safety violations and a higher reach rate than modern constrained RL methods.
\end{enumerate}

\section{Background}

\subsection{Reinforcement Learning}
\label{sec:rl}
Reinforcement learning (RL) generates an optimal controller by interacting with an environment given a scalar reward signal.  DRL scales RL to problems with high dimension state and action spaces by means of neural networks \cite{lillicrap2015continuous}. Most DRL algorithms consider problems under the Markov Decision Process (MDP) setting. Given a state space $\mathcal{S}$ and an action space $\mathcal{A}$, the MDP models interaction with an environment using a transition function $\mathcal{T}(s_t, a_t, s_{t+1}): \mathcal{S} \times \mathcal{A} \times \mathcal{S} \rightarrow \mathcal{P}$. The function $\mathcal{T}$ returns the probability of transitioning from a state $s_t$ to a state $s_{t+1}$ after taking an action $a_t$. Meanwhile, a function $R(s_t, a_t, s_{t+1})$ measures the transition reward. Given a discount factor $\gamma \in [0, 1]$, a controller $\pi: \mathcal{S} \rightarrow \mathcal{A}$ learns to maximize the discounted cumulative reward function $\sum_{t=0}^{T-1} \gamma^t R(s_t, a_t, s_{t+1})$ where $T$ is the total time horizon.

Many model-free on-policy (\cite{trpo,ppo}) and off-policy (\cite{ddpg,td3,sac}) DRL algorithms appeared in recent years. Among all these algorithms, our work is closely related to the DDPG \cite{lillicrap2015continuous}. An important component of DDPG is the $Q$ function $Q: \mathcal{S} \times \mathcal{A} \rightarrow \mathbb{R}$. $Q(s_t, a_t)$ returns the discounted cumulative reward after taking action $a_t$ under state $s_t$. A higher return of $Q$ means a better action. Thus knowing $Q(s_t, a_t)$ allows us to choose the best action. In DDPG, a learned Q function serves as the critic for actions. An actor policy $\pi(s_t)$ is trained to maximize $Q(s_t, \pi(s_t))$. From the Q function we can derive the optimal policy being $\pi^*(s_t) = \underset{\pi}{\arg \max}\ Q(s_t, \pi(s_t))$.

\subsection{Lyapunov Method}

\subsubsection{Lyapunov Function}
\label{sec:lyapunov_function}
A Lyapunov function characterizes the stability property of a controller. It is a function satisfying the following constraints:
\begin{align}
     & V(s_o)                                    & = 0 & \label{eq:origin}         \\
     & V(s_t)              & > 0 &, \forall s_t \neq s_o\label{eq:positive}       \\
     & V\left(s_{t+1}\right) - V\left(s_t\right) & < 0 & \label{eq:lie_derivative}
\end{align}

In Eq.~\eqref{eq:origin}, a Lyapunov function's value is 0 at the origin $s_o$. $s_o$ is a stabilized state of the controller. Eq.~\eqref{eq:positive} enforces that the Lyapunov function is always positive when its input is not $s_o$. The L.H.S. of Eq.~\eqref{eq:lie_derivative} is known as the lie derivative, $\nabla_\pi V = V\left(s_{t+1}\right) - V\left(s_t\right)$. When the lie derivative is smaller than 0, $V(s_t)$ strictly decreases over time.  The lie derivative depends on controller $\pi$, since computing $s_{t+1}$ requires action $a_t = \pi(s_t)$. 

\subsubsection{Neural Lyapunov Function}
\label{sec:neural_lyapunov_function}
A neural Lyapunov function $V(s_t): \mathcal{S} \rightarrow \mathbb{R}$ models a Lyapunov function via a neural network satisfying Eqs.~\eqref{eq:origin}\eqref{eq:positive}\eqref{eq:lie_derivative}. Training the NLF requires minimizing the Lyapunov risk \cite{richards2018lyapunov,chang2020neural}. The Lyapunov risk is defined as
\begin{equation}
    \label{eq:lyapunov_risk}
    \begin{aligned}
        L_{lf}(\theta_{V})= \mathbb{E}_{s_t \sim (E, \pi)}[ & V_{\theta_{V}}^{2}(s_o)                                \\
                                                            & + \max \left(0,-V_{\theta_{V}}(s_t)\right)             \\
                                                            & + \max \left(0, \nabla_\pi V_{\theta_{V}}(s_t)\right)]
    \end{aligned}
\end{equation}
Here, $\theta_{V}$ represents the parameters of the NLF with $\pi$ being a controller. The Lyapunov risk is an expectation over $s_t$ sampled from environment $E$ and controller $\pi$. Minimizing $V_{\theta_{V}}^2(s_o)$ allow $V_{\theta_{V}}$ to satisfy Eq.~\eqref{eq:origin}, while minimizing $\max \left(0,-V_{\theta_{V}}(s_t)\right)$ and $\max \left(0, \nabla_\pi V_{\theta_{V}}(s_t)\right)$ makes $V_{\theta_{V}}$ satisfy Eq.~\eqref{eq:positive} and Eq.~\eqref{eq:lie_derivative} respectively. $\nabla_\pi V_{\theta_{V}}$ is the lie derivative over controller $\pi$ as mentioned above.

\subsubsection{Region of Attraction}
The Lyapunov function specifies a \emph{Region of Attraction} (RoA) as
\begin{align}
    \label{eq:roa}
    \text{RoA} = \{ s | V(s) < C_{RoA}\},
\end{align}
where $C_{RoA} \in \mathbb{R}^+$ is a constant. Since the Lyapunov function strictly decreases over time, if we initialize a robot with any state in an RoA, the robot will always stay in the RoA in future. Since $V(s_o) = 0$, the origin $s_o$ must be included in the RoA. It is also known as the sink of the RoA.

\subsection{Path Planning}

Sample-based planning methods like RRT \cite{lavalle2006planning} can be used to find a path to reach a goal state. RRT grows a collision-free tree from a given start state by randomly sampling the robot's configuration space. Once the tree finds a goal state, a Dijkstra algorithm extracts the shortest path connecting the given start and goal states for our navigation tasks.

\subsection{Goal-Conditioned State Space}
\label{sec:goal_conditioned_state_space}

Goal-conditioned RL (\cite{her2017}) augments the state space by conditioning it with a static goal $g \in \mathcal{G}$ where $\mathcal{G}$ is the goal space.  In our problem, we consider a 2D goal space ($\mathcal{G} \subseteq \mathbb{R}^2$) representing a position in the 2D plane where a robot operates.  Suppose the position of a robot under state $s$ is $pos(s) \in \mathbb{R}^2$, we define a goal vector
\begin{align*}
    d_g (s) = g - pos(s)
\end{align*}
The state $s_g \in \mathcal{S}_g$ is a modified version of $s\in \mathcal{S}$ to contain this goal vector $d_g(s)$, and the intrinsic state $s_{/g}$ of a robot. The intrinsic state does not change as the robot position changes. Thus, every goal and state pair $(g, s)\in \mathcal{G} \times \mathcal{S}$ is mapped to
\begin{align}
    \label{eq:state_space}
    s_g(s) = [d_g (s), s_{/g}]
\end{align}
We train the controller and NLF in this goal-conditioned space to provide the flexibility of generalizing the learning-based components to different goals.

Since we must generalize the Lyapunov function to different goals as well, we always set the sink position $pos(s_o)$ to the current goal $g$.  This effectively translates the Lyapunov function to be centered around an arbitrary goal $g$ which means $pos(s_o) = g$ and the goal vector $d_g(s_o) = [0,0]$ (when $\mathcal{G} \subseteq \mathbb{R}^2$). We choose this setting because our controller's objective is to arrive and be stable at the goal, while the Lyapunov function desires that the robot is asymptotically stabilized to $s_o$.

\section{Approach}
Fig.~\ref{fig:approach_overview} depicts an overview of our approach. The navigation task requires guiding a robot to reach the final goal while avoiding collision with hazardous zones.

\begin{figure}[ht]
    \includegraphics[width=\linewidth]{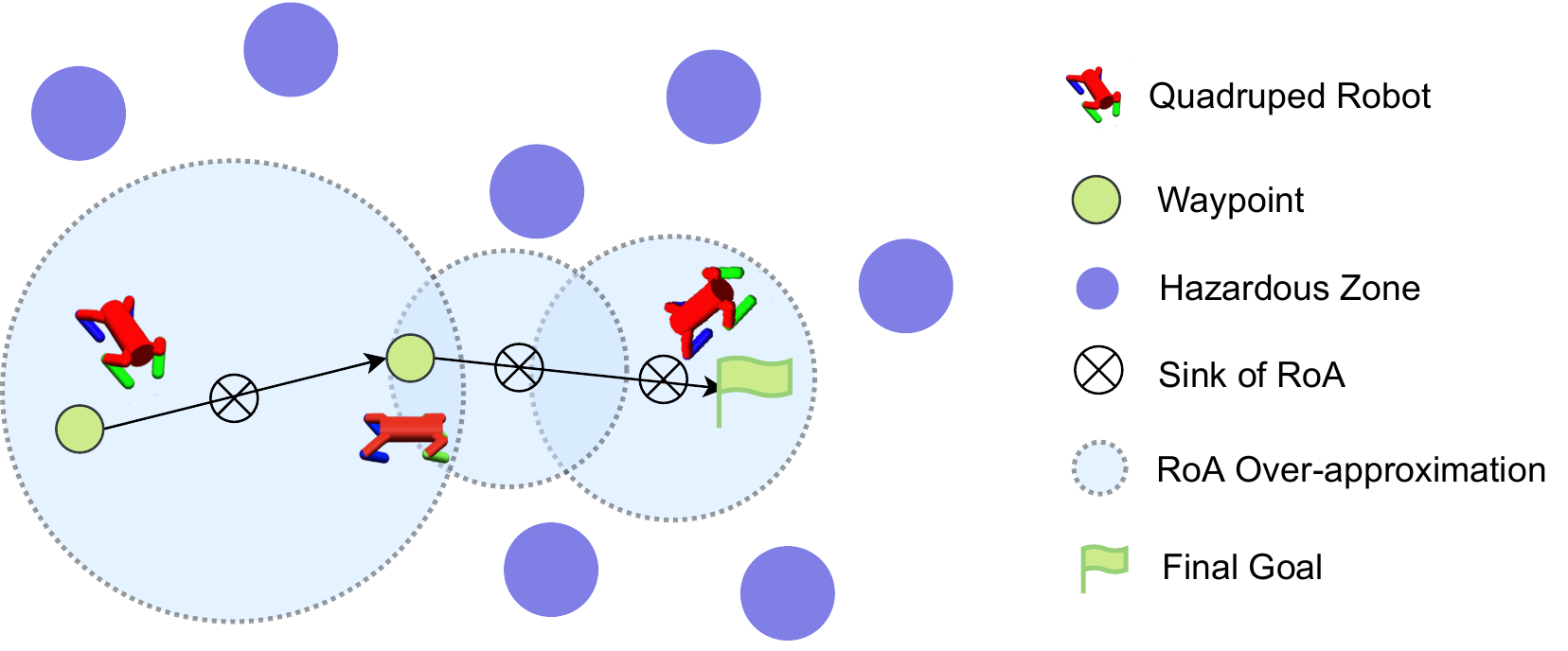}
    \caption{Approach Overview: A robot is safely navigated from start to goal in over-approximated regions of attraction.}
    \label{fig:approach_overview}
\end{figure}

Our approach has three steps. First, we co-learn the controller and TNLF in a DRL loop. The controller learns to reach a given target in the shortest time. The design of the TNLF is introduced in Sec.~\ref{sec:TNLF}, and details about the co-learning procedure are given in Sec.~\ref{sec:co-learning-approach}. Second, we run RRT on the 2D plane to build a collision-free path between the initial and goal positions. The path is a sequence of waypoints. Third, we use the learned controller to follow waypoints generated by the RRT planner and provide the safety guarantee with a runtime monitor. This runtime monitor places and builds the RoAs over-approximations with the learned TNLF, and it is introduced in Sec.~\ref{sec:runtime_monitor}.


\subsection{Twin Neural Lyapunov Function}
\label{sec:TNLF}

The TNLF is a key component of our co-learning algorithm. Firstly, we define a function \lqffn{s_t, a_t} to integrate the Lyapunov function into the DDPG training loop. \lqffn{s_t, a_t} predicts the Lyapunov function value $V(s_{t+1})$ after taking action $a_t$. Then, similar to DDPG introduced in Sec.~\ref{sec:rl}, we can train our policy to minimize \lqffn{s_t, \pi(s_t)}. Since $\lqf$ works similarly to the $Q$ function in DDPG, we name \lqffn{s_t, a_t} the \emph{Lyapunov Q function} and train \lqffn{s_t, a_t} with a regression loss,
\begin{align}
    \label{eq:lqf_loss}
    L_{lqf}(\theta_{\lqf}) = || \lqf_{\theta_{\lqf}}(s_t, a_t) - V_{\theta_{V}}(s_{t+1}) ||_2
\end{align}
where $s_{t+1}$ is the state that results from taking  action $a_t$ in $s_t$. Eq.~\eqref{eq:lqf_loss} is only used to update $\theta_\lqf$. The parameter $\theta_{V}$ is fixed when updating parameters with Eq.~\eqref{eq:lqf_loss}. Finally, we call $V(s_t)$ and \lqffn{s_t, a_t} together as the twin neural Lyapunov function.

\subsection{Co-learn the TNLF with controller}
\label{sec:co-learning-approach}

Co-learning the TNLF with the controller has significant benefits. First, when the Lyapunov function learns to characterize a controller, the controller is also adapted to the Lyapunov function. Thus, we can learn a better NLF to characterize stability properties. Second, the integration of the Lyapunov function can accelerate and stabilize controller convergence. This is because the Lyapunov function provides an additional training signal for RL. Similar ideas have also appeared in \cite{td3, sac}, where the authors employ double $Q$ functions for a better training signal.

\begin{algorithm}[htp]
    \caption{Co-learn TNLF with Control Policy}

    \label{algo:co-learn}
    \Notions{Environment $E$, Policy $\pi$, Q function $Q$, Lyapunov Function $\color{red}V$, Lyapunov Q function $\color{red}\lqf$, Training Network Parameter $\theta_{[\cdot]}$, Target Network Parameter $\theta'_{[\cdot]}$, Training Batch Size $N$, Polyak Constant $\tau$}

    Initialize training network $\pi_{\theta_\pi}, Q_{\theta_Q}$, $\color{red}V_{\theta_{V}}$, $\color{red}\lqf_{\theta_{\lqf}}$;

    Create target network $\pi_{\theta'_\pi}, Q_{\theta'_Q}$,$ \color{red}\lqf_{\theta'_{\lqf}}$;

    Initialize all target networks parameters $\theta'_{[\cdot]} \leftarrow \theta_{[\cdot]}$;

    \For{$i = 1, \dots, T_{ep}$}{
        Replay Buffer $\mathcal{R} \leftarrow \mathtt{Transitions}(E, \pi_{\theta'_\pi})$;\\
        \For{$j = 1, \dots, T_{grad}$}{
            Data Batch $\mathcal{D} \leftarrow \mathtt{Sample}(\mathcal{R}, N)$;\\
            $\mathtt{TrainQ}(Q_{\theta_Q} \mid \mathcal{D})$;\\
            $\color{red} \mathtt{TrainTNLF}(V_{\theta_V}, \lqf_{\theta_{\lqf}} \mid \mathcal{D})$; \\
            $\color{red} \mathtt{TrainControlPolicy}(\pi_{\theta_{\pi}} \mid Q_{\theta'_Q}, \lqf_{\theta'_{\lqf}}, \mathcal{D})$;\\
            $\mathtt{PolyakUpdate}(\theta'_{[\cdot]} \mid \theta_{[\cdot]}, \tau)$;
        }
    }
\end{algorithm}

Algorithm~\ref{algo:co-learn} describes our co-learning framework. The policy network $\pi_{\theta_\pi}$, the Q function network $Q_{\theta_Q}$, and their target networks $\pi_{\theta'_\pi}$ and $Q_{\theta'_Q}$ come from the original DDPG algorithm. We integrate our Lyapunov function network $V_{\theta_V}$ and Lyapunov Q function network $\lqf_{\theta_{\lqf}}$ into the co-learning loop with the functions and variables highlighted in \textcolor{red}{red}. The Lyapunov function network $V_{\theta_V}$ will be used in our downstream planning algorithm. The Lyapunov Q function network \lqffn{s_t, a_t} serves as another critic for action $a_t$. As a critic, the \lqffn{s_t, a_t} affects the stability of the whole training process. Thus, we also created a target network $\lqf_{\theta'_{\lqf}}$ to avoid it changing dramatically.

Line 1 to 3 of Algorithm~\ref{algo:co-learn} initialize all the networks' parameters. Line 5 samples the transitions with target policy network $\pi_{\theta'}$ and store these transitions to replay buffer $\mathcal{R}$. The loop starting from line 5 updates the parameters of each network. Line 7 samples training data batch $\mathcal{D}$ from replay buffer $\mathcal{R}$. Line 8 updates $\theta_Q$ with sampled data $\mathcal{D}$ with the approach in \cite{lillicrap2015continuous}. Line 9 trains the $V_{\theta_V}, \lqf_{\theta_{\lqf}}$ with $\mathcal{D}$. The loss function for $V_{\theta_V}$ is the Lyapunov risk defined in Eq.~\eqref{eq:lyapunov_risk}, and $\lqf_{\theta_{\lqf}}$ is trained with loss in Eq.~\eqref{eq:lqf_loss}. Line 10 trains controller $\pi_{\theta_{\pi}}$ with $Q_{\theta'_Q}, \lqf_{\theta'_{\lqf}}, \mathcal{D}$. The controller $\pi_{\theta_\pi}$ is trained with the loss
\begin{align}
    L_{\pi_{\theta_\pi}} = \mathbb{E}_{s_t}\ [-Q_{\theta'_Q}(s_t, \pi_{\theta_\pi}(s_t)) + \alpha \lqf_{\theta'_{\lqf}}(s_t, \pi_{\theta_\pi}(s_t))]
\end{align}

The training goal is to minimize $L_{\pi_{\theta_\pi}}$ via optimizing policy $\pi_{\theta_\pi}$. Minimizing the $-Q_{\theta'_Q}$ term leads to maximizing the cumulative reward predicted by $Q_{\theta'_Q}$. When minimizing  $\lqf_{\theta'_{\lqf}}$, the controller is learning to minimize the value of the Lyapunov function in the next step, which adapts the controller to satisfy Eq.~\ref{eq:lie_derivative}. Here, $\alpha$ is a hyperparameter that controls the $\lqf$ impact on the update of policy $\pi$. Line 11 conducts a Polyak update on the target networks for enhanced learning stability \cite{lillicrap2015continuous}. It updates all the target parameters with
\begin{align*}
    \theta'_{[\cdot]} = (1-\tau)\theta'_{[\cdot]} + \tau \theta_{[\cdot]}
\end{align*}
where $\tau \in (0,1]$, and $[\cdot]$ can be $\pi, \mathcal{Q}$ or $Q$.

\subsection{Runtime Monitor}
\label{sec:runtime_monitor}

Algorithm~\ref{algo:co-learn} provides us with a controller $\pi$ and an NLF $V$ characterizing $\pi$.  We then run the RRT planner to generate a collision-free path to reach the final goal. A robot follows this path with the controller $\pi$, while its safety is ensured by a sequence of RoAs placed over the path. How to build these RoAs and where to place them are the two problems we address.

\begin{figure}[htb]
    \centering
    \begin{subfigure}[b]{0.47\linewidth}
        \includegraphics[width=\linewidth]{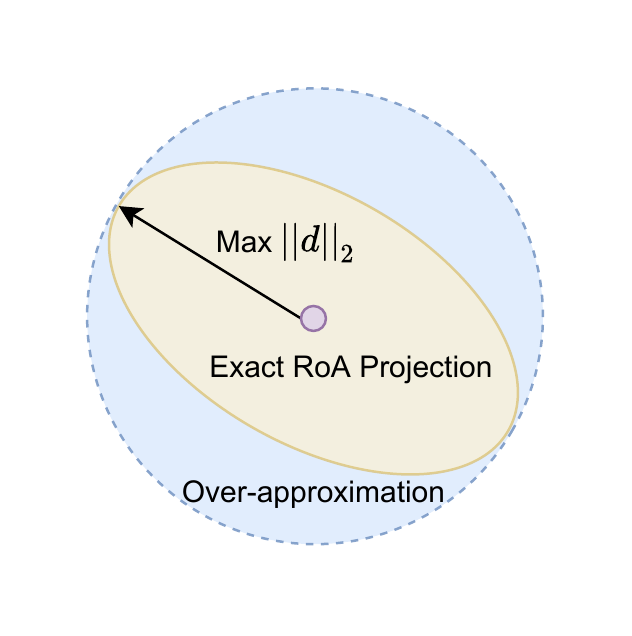}
        \vspace{-27px}
        \caption{RoA Over-approximation}
        \label{fig:roa_over_appr}
    \end{subfigure}
    \begin{subfigure}[b]{0.39\linewidth}
        \includegraphics[width=\linewidth]{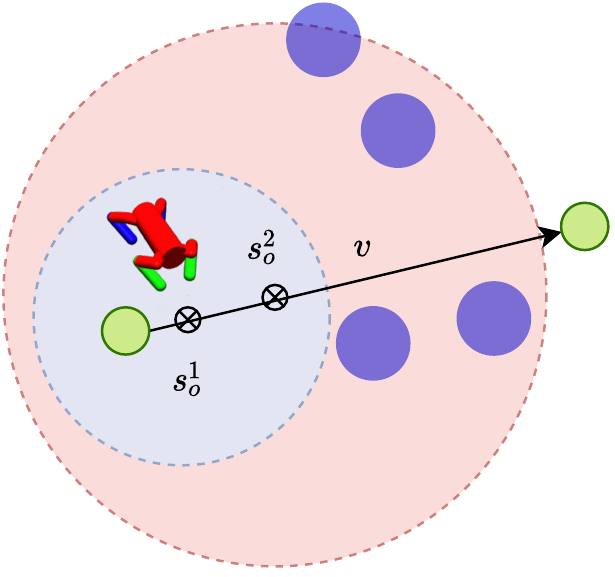}
        \caption{Search Sink}
        \label{fig:search_sink}
    \end{subfigure}
    \caption{Our runtime monitor builds and aligns over-approximated RoAs around planner's waypoints.}
\vspace{-0.5cm}\end{figure}

\paragraph{Building  RoA over-approximations}
The runtime monitor builds the RoA over-approximation for the exact RoA projection on the 2D path plane. Given an NLF $V$, we can build an RoA in the state space $\mathcal{S}$. Because we only care about the collision in the 2D path plane, we project the RoA to the path plane. If this projection does not collide with a hazardous zone, the RoA inside must not collide with this hazardous zone. However, computing the exact projection can be hard. Instead, we propose an approach to compute the over-approximation of this projection with a demonstration provided in Fig.~\ref{fig:roa_over_appr}. Given a NLF $V$ and a constant $C_{RoA}$, the RoA is specified as $\{s \mid V(s) < C_{RoA}\}$. The over-approximation we build is a circle around the projected RoA in the 2D path plane. If we place the center of this circle in the position of the sink (the current robot goal $g$), the radius we need to compute is the max L2-norm of the goal vector $d_g(s)$ for all the states in this RoA. We search the max $||d_g(s)||_2$ by maximizing the objective function in Eq.~\eqref{eq:over_approx}. 
\begin{align}
    \label{eq:over_approx}
    L_{ap}(s) = ||d_g(s)||_2 + \beta \min(C_{RoA} - V(s), 0)
\end{align}
When $\beta\in \mathbb{R^+}$ is a large positive constant, $\beta \min(C_{RoA} - V(s), 0)$ is a constraint which forces the search to stay in the RoA $\{s \mid V(s) < C_{RoA}\}$. When  $s$ is in the RoA specified by $V$ and $s_o$, then $C_{RoA} \geq V(s)$ and $\min(C_{RoA} - V(s), 0)$ will be 0. Otherwise, $\min(C_{RoA} - V(s), 0)$ will be negative and penalizes the objective function. We optimize
\begin{align}
    \label{eq:over_apprx_opt}
    s^* = \underset{s \in \mathcal{S}}{\arg \max}\ L_{ap}(s)
\end{align}
where we choose a large $\beta$ to enforce $\min(C_{RoA} - V(s), 0) = 0$. We sample a batch of initial optimization states from $\mathcal{S}$ and optimize them with projected gradient descent~\cite{chen2015fast} to estimate a solution to Eq.~\eqref{eq:over_apprx_opt}. Since $s^* = [d_g(s^*), s^*_{/g}]$, we can compute the radius as $||d_g(s^*)||_2$.

\paragraph{Placing RoA over-approximation}
Fig.~\ref{fig:search_sink} shows how the runtime monitor places the sink and RoA over-approximations along the planned path. We prefer a larger RoA over-approximation because smaller regions usually result in more conservative behavior and make the robot move slower. However, if these RoA over-approximations intersect with the hazardous zones, the robot may violate safety properties. Therefore, our goal is to find the largest over-approximation that has no intersection with the hazardous zones. Given a vector $v = g_2 - g_1 \in \mathbb{R}^2$ from one waypoint $g_1 \in \mathcal{G}$ to the next waypoint $g_2 \in \mathcal{G}$, we search the positions of sinks between $g_1$ and $g_2$ using a line search. The position of a sink is given by $$pos(s_o^i) = g_1 + i \delta v $$ where $i \in \mathbb{N}^+$, $\delta$ is a number controlling the granularity of the search, $\delta \in (0, 1]$. Given a state $s_t$, we build the smallest RoA, denoted as $\text{RoA}^{*}$, which includes $s_t$.
\begin{align}
    \label{eq:smallest_roa}
    \text{RoA}^{*}(s_t) = \{s \mid V(s) \leq V(s_t)\}
\end{align}
The state $s_t$ depends on the position of sink $pos(s_o^i)$ (i.e. goal of a goal-conditioned state space).  Hence, choosing different sink positions results in different RoA over-approximation as demonstrated in Fig.~\ref{fig:search_sink}. To find the largest over-approximation that has no intersection with hazardous zones, we need to compute the radius of over-approximation for every sink position via optimizing Eq.~\ref{eq:over_approx}. Finally, we select the sink position with the largest radius and set it as the goal of the robot. We repeat this line searching in every step. Intuitively, this makes the RoA over-approximation change adaptively based on surrounding hazardous zones.

\paragraph{Pre-computed RoA-overapproximations}
Because computing the over-approximation in real-time can be computationally expensive, in practice, we pre-compute and reference the radius of over-approximation via a lookup table. Eq.\eqref{eq:roa} tells us that the shape of the RoA only depends on $C_{RoA}$ when given a trained NLF $V$. Thus, one $C_{RoA}$ can only correspond to one minimal circle over-approximation. 
We pre-compute a lookup table that maps the $C_{RoA}$ to its corresponding radius of the minimal circle over-approximation. According to Eq.~\eqref{eq:smallest_roa}, the $C_{RoA}$ of $\text{RoA}^{*}(s_t)$ is determined by $V(s_t)$. However, because the lookup table is finite and cannot capture all possible $C_{RoA}$s, we over-approximate the RoA with a larger circle, instead of the minimal one.

\begin{theorem}
    \label{trm:lookup_table}
    Suppose the lookup table has keys $C_{RoA}^1, \cdots C_{RoA}^N$ in increasing order, given a state $s_t$, and $C_{RoA}^i < V(s_t) \leq C_{RoA}^{i+1}$, the lookup table returns the radius corresponding to $C_{RoA}^{i+1}$. The over-approximation built with the returned radius must over-approximate $\text{RoA}^{*}(s_t)$.
\end{theorem}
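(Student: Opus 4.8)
The plan is to reduce the claim to a simple monotonicity property of the over-approximation radius viewed as a function of the level constant $C_{RoA}$. First I would define, for any level $C$, the minimal over-approximation radius
\begin{align*}
    r(C) = \max_{s:\,V(s) \le C} \|d_g(s)\|_2,
\end{align*}
which is precisely the quantity that the optimization in Eq.~\eqref{eq:over_approx} computes once the sink is fixed at the current goal $g$ (so that $pos(s_o) = g$, $d_g(s_o) = [0,0]$, and every candidate circle is concentric, centered at $g$). By Eq.~\eqref{eq:smallest_roa}, the set $\text{RoA}^{*}(s_t) = \{s \mid V(s) \le V(s_t)\}$ is the sublevel set at $C = V(s_t)$, so its minimal circular over-approximation has radius exactly $r(V(s_t))$.

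Next I would establish that $r$ is monotone non-decreasing. This follows because the sublevel sets of $V$ are nested under set inclusion: if $C_1 \le C_2$ then $\{s \mid V(s) \le C_1\} \subseteq \{s \mid V(s) \le C_2\}$, and maximizing the fixed objective $\|d_g(s)\|_2$ over a larger feasible set cannot decrease the optimum, hence $r(C_1) \le r(C_2)$. Applying this with $C_1 = V(s_t)$ and $C_2 = C_{RoA}^{i+1}$, the hypothesis $V(s_t) \le C_{RoA}^{i+1}$ immediately yields $r(V(s_t)) \le r(C_{RoA}^{i+1})$, and the lookup table returns exactly $r(C_{RoA}^{i+1})$.

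It then remains to convert this radius inequality into the desired containment. The relevant projection of $\text{RoA}^{*}(s_t)$ onto the 2D path plane is $\{pos(s) \mid V(s) \le V(s_t)\}$; for each such $s$ we have $\|g - pos(s)\|_2 = \|d_g(s)\|_2 \le r(V(s_t))$ by definition of the maximum, so every projected point lies in the disk of radius $r(V(s_t))$ centered at $g$. Since all the circles are concentric, the disk of the returned radius $r(C_{RoA}^{i+1}) \ge r(V(s_t))$ contains this smaller disk, and therefore contains the projection of $\text{RoA}^{*}(s_t)$, which is exactly what ``over-approximates $\text{RoA}^{*}(s_t)$'' means.

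I do not expect a genuine obstacle here; the only care needed is bookkeeping of the definitions. The two points to make explicit are that the sink, and hence the circle center, is held fixed at $g$ while $C_{RoA}$ varies, and that over-approximation is containment of the 2D projection rather than of the full state-space set. Once $r$ is recognized as monotone in the level via sublevel-set inclusion, the result is immediate: no regularity of the learned $V$ beyond being scalar-valued is used, and the lower key $C_{RoA}^{i}$ is irrelevant to the conclusion (it only certifies that $V(s_t)$ falls in the interval served by $C_{RoA}^{i+1}$).
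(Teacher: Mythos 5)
Your proposal is correct and follows essentially the same route as the paper's proof: both rest on the sublevel-set inclusion $V(s_t) \leq C_{RoA}^{i+1} \implies \{s \mid V(s) \leq V(s_t)\} \subseteq \{s \mid V(s) \leq C_{RoA}^{i+1}\}$, from which the maximal distance to the sink (your $r(C)$, the paper's extremal point $p$) is monotone in the level, so the returned radius dominates. Your write-up is if anything slightly more explicit about the 2D projection and the fixed concentric center, but the underlying argument is identical.
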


\begin{proof}
    \begin{align*}
        \text{RoA}^{*}(s_t) & = \{s \mid V(s) \leq V(s_t)\}        \\
        \overline{\text{RoA}}  & = \{s \mid V(s) \leq C_{RoA}^{i+1}\}
    \end{align*}
    $V(s_t) \leq C_{RoA}^{i+1} \implies \text{RoA}^{*}(s_t) \subseteq \overline{\text{RoA}}$.  Suppose a point $p \in \text{RoA}^{*}(s_t)$ has the largest L2-norm to the sink $s_o$. $||p - pos(s_o)||_2$ is the over-approximation radius of $\text{RoA}^{*}(s_t)$. $\text{RoA}^{*}(s_t) \subseteq \overline{\text{RoA}} \implies p \in \overline{\text{RoA}}$. Thus, the over-approximation radius of $\overline{\text{RoA}}$ is at least $||p - pos(s_o)||_2$. Hence, the over-approximation of $\overline{\text{RoA}}$ must over-approximates $\text{RoA}^{*}(s_t)$.
\end{proof}

According to Theorem~\ref{trm:lookup_table}, the lookup table can always return the radius corresponding to $C_{RoA}^{i+1}$ for building a circle over-approximation of $\text{RoA}^{*}(s_t)$. The pre-computed over-approximation can boost computational speed. Over-approximating an RoA will only require one forward computing on the neural network model $V$ and a query in the lookup table. In this way, our algorithm can be effectively deployed online.

\section{Experiments}

Our experiments aim to answer the following questions:
\begin{itemize}
  \item Is co-learning a viable way to learn a performant policy while gaining a high-quality Lyapunov function?
  \item Can we leverage the learned Lyapunov function for more safety-oriented tasks?
  \item Does our safety-oriented framework sacrifice the completion of the objective and the speed of reaching the goal in favor of safety?
\end{itemize}

\subsection{Setup}
\label{sec:exp_setup}

We show results on both a custom 2D sweeping robot environment as well as robotic environments in Safety Gym \cite{Ray2019}  with continuous state and action spaces.

\begin{figure}[ht]
  \centering
  \begin{subfigure}[b]{0.4935\linewidth}
    \includegraphics[width=\textwidth]{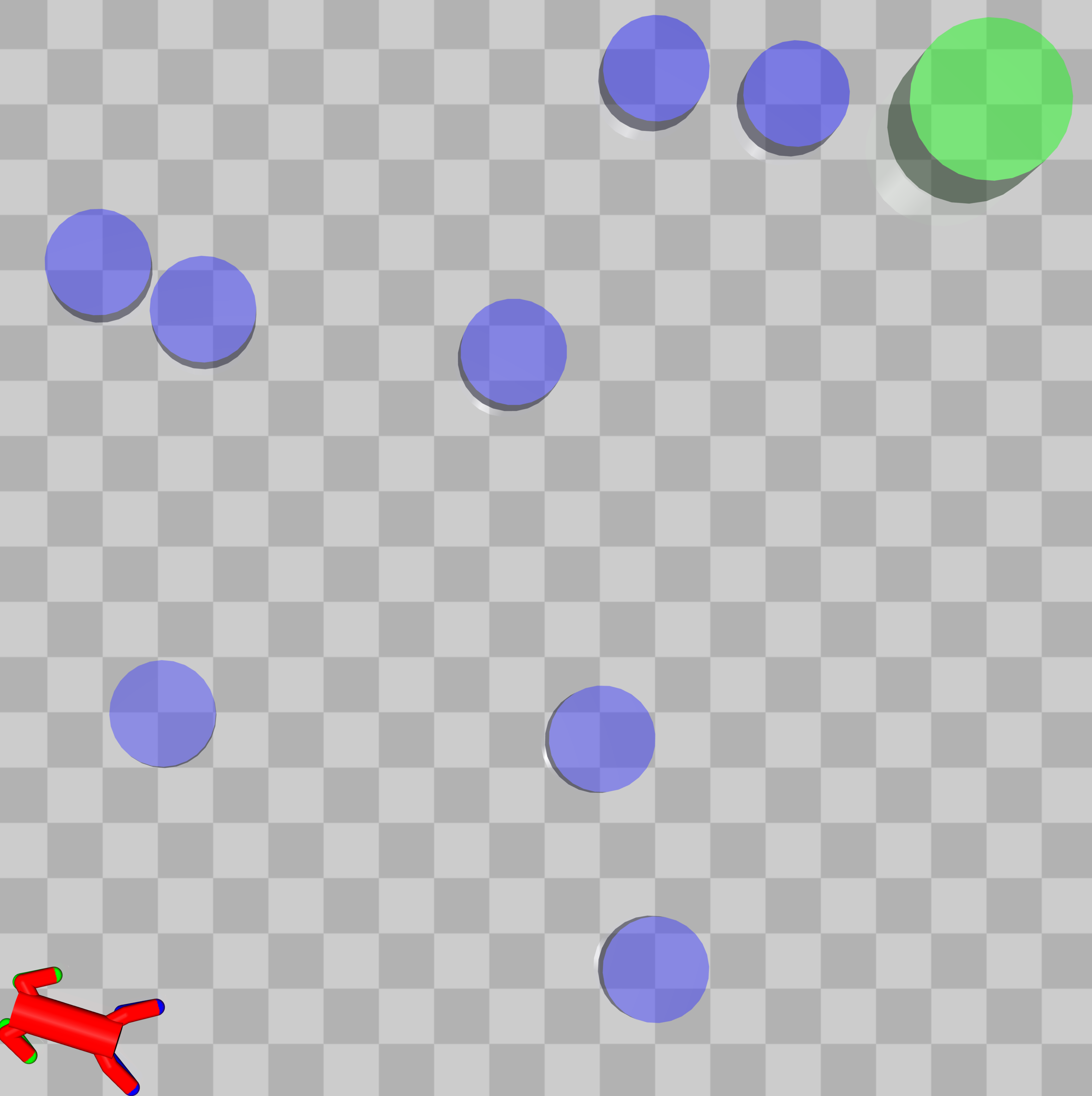}
    \caption{Level-1 Map}
  \end{subfigure}
  \begin{subfigure}[b]{0.49\linewidth}
    \includegraphics[width=\textwidth]{images/quadruped-lv-3.pdf}
    \caption{Level-3 Map}
  \end{subfigure}
  \caption{Difficulty Levels of Navigation Tasks}
  \label{fig:difficulty_of_tasks}
\end{figure}

The benchmarks for each robot were classified into three levels of difficulty based on the number of hazardous zones and map size as shown in Fig.\ref{fig:quadruped-lv-2} and Fig.\ref{fig:difficulty_of_tasks}. The initial and goal positions are placed on the map's lower-left corner and upper-right corner, respectively. The hazardous zone positions are initialized randomly for each run. The map size for difficulty level-1 to level-3 are $4 \times 4, 8 \times 8$ and $16 \times 16$, respectively. Level-1 has $8$ hazardous zones, and level-2 and level-3 have $32$ and $128$ hazardous zones, respectively.


\begin{figure}[htb]
  \begin{subfigure}[b]{0.23\linewidth}
    \includegraphics[width=0.8\textwidth]{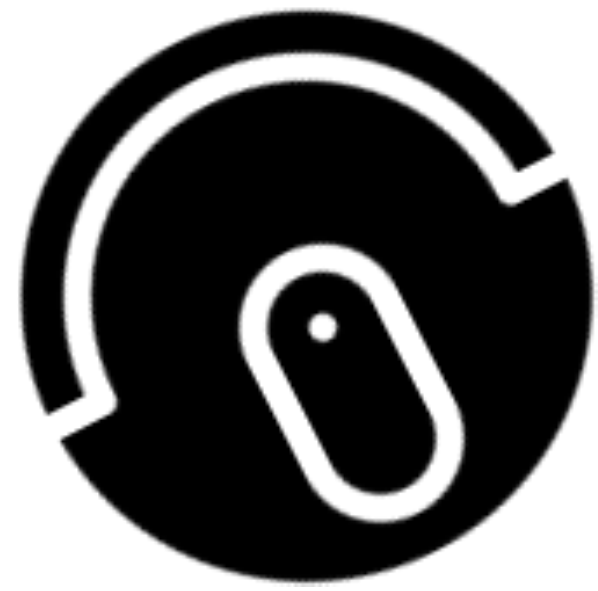}
    \caption{\textit{Sweeping}}
  \end{subfigure}
  \begin{subfigure}[b]{0.23\linewidth}
    \includegraphics[width=0.8\textwidth]{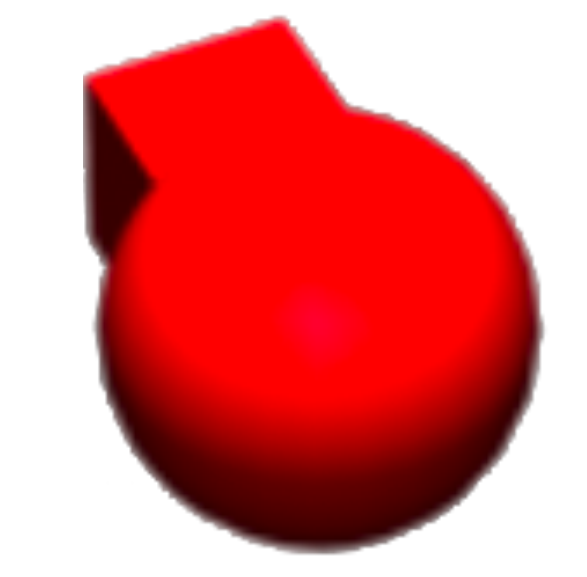}
    \caption{\textit{Point}}
  \end{subfigure}
  \begin{subfigure}[b]{0.23\linewidth}
    \includegraphics[width=\textwidth]{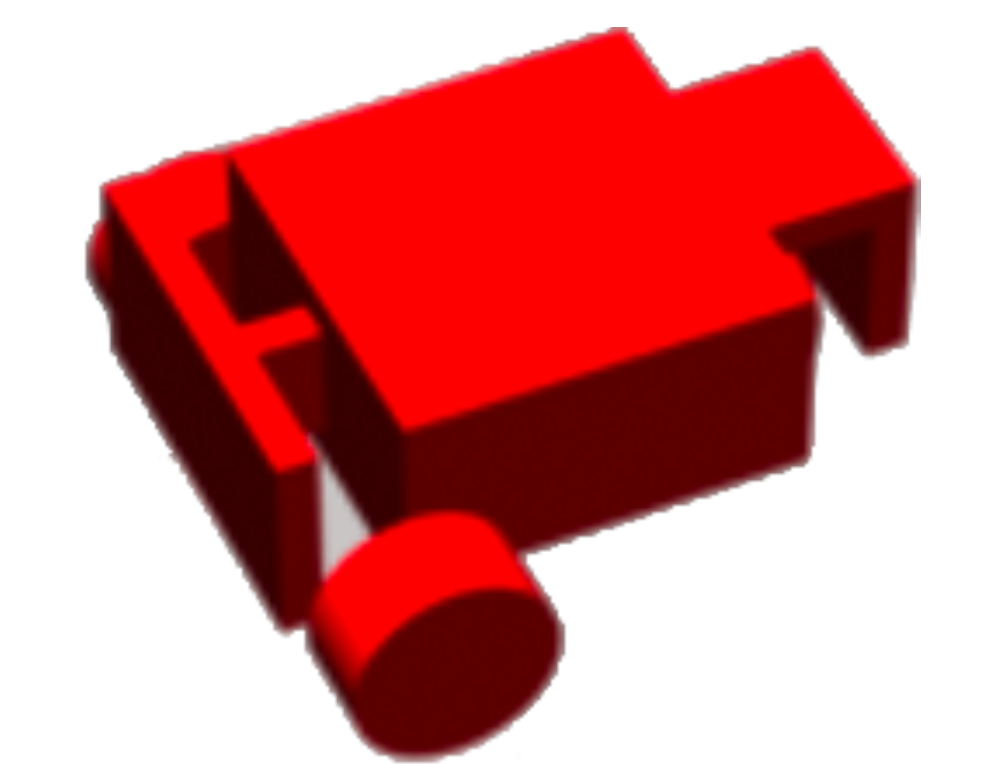}
    \caption{\textit{Car}}
  \end{subfigure}
  \begin{subfigure}[b]{0.23\linewidth}
    \includegraphics[width=\textwidth]{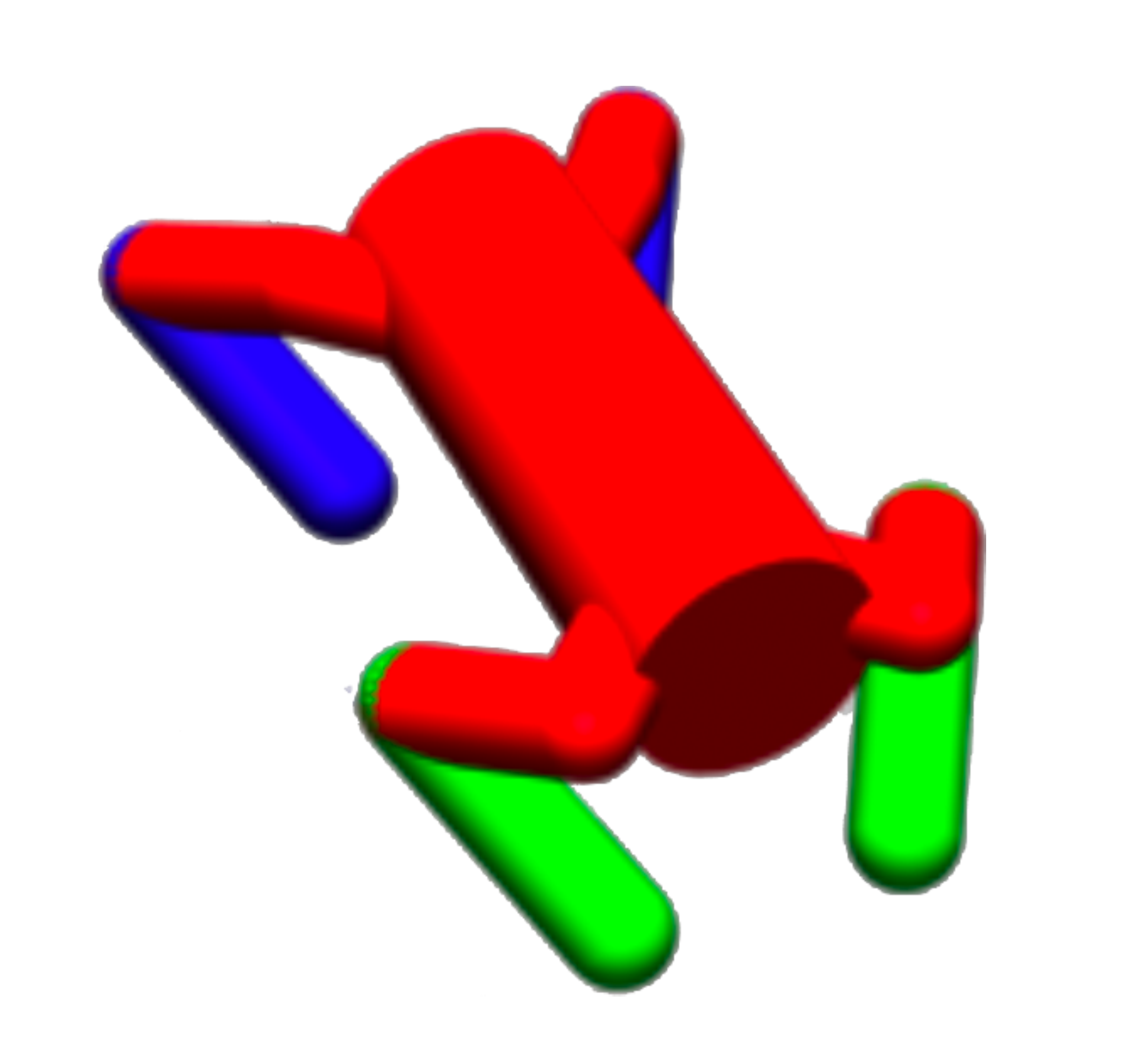}
    \caption{\textit{Quadruped}}
  \end{subfigure}
\caption{Our dynamical systems include \textit{sweeping}, \textit{point}, \textit{car}, and \textit{quadruped}. Their corresponding state and action space dimensions, denoted as (state dim, action dim), from (a)-(d) are (2,2), (14,2), (26,2) and (58,12), respectively.}
\label{fig:dynamical_models}
\end{figure}

Fig.\ref{fig:dynamical_models} depicts our dynamical models with their state and action space dimensions. \textit{Sweeping} is a customized environment for moving the position of a sweeping robot in a 2-D plane. \textit{Point} models a robot restricted to the 2-D plane with actuators to rotate and move forward. \textit{Car} has two independently operated parallel wheels with a rear-wheel for balance. \textit{Quadruped} models a quadrupedal robot with each leg having torque controls in the hip and knee joints. All agents obtain their state information from the joints, accelerometer, gyroscope, magnetometer, velocimeter, and a 2D vector toward the goal.

\subsection{Co-learning}

The low level controller and TLCF are trained in an environment without hazardous zones. Fig. \ref{fig:training_perf} shows the learning progress for each of our robots in this hazardous zone-free environment. The reward used for training the controller is defined as
\begin{align}
  \label{eq:controller_reward}
  r_t = ||g - pos(s_t)||_2 - ||g - pos(s_{t+1})||_2
\end{align}
where $r_t$ is the reward at time $t$, $g$ is the goal position the controller should achieve, $pos(s_t)$ and $pos(s_{t+1})$ are the positions of robot at time $t$ and $t+1$, respectively. This reward is positive when $pos(s_{t+1})$ is closer to goal than $pos(s_t)$.

\begin{figure}[htb]
  \includegraphics[width=\linewidth]{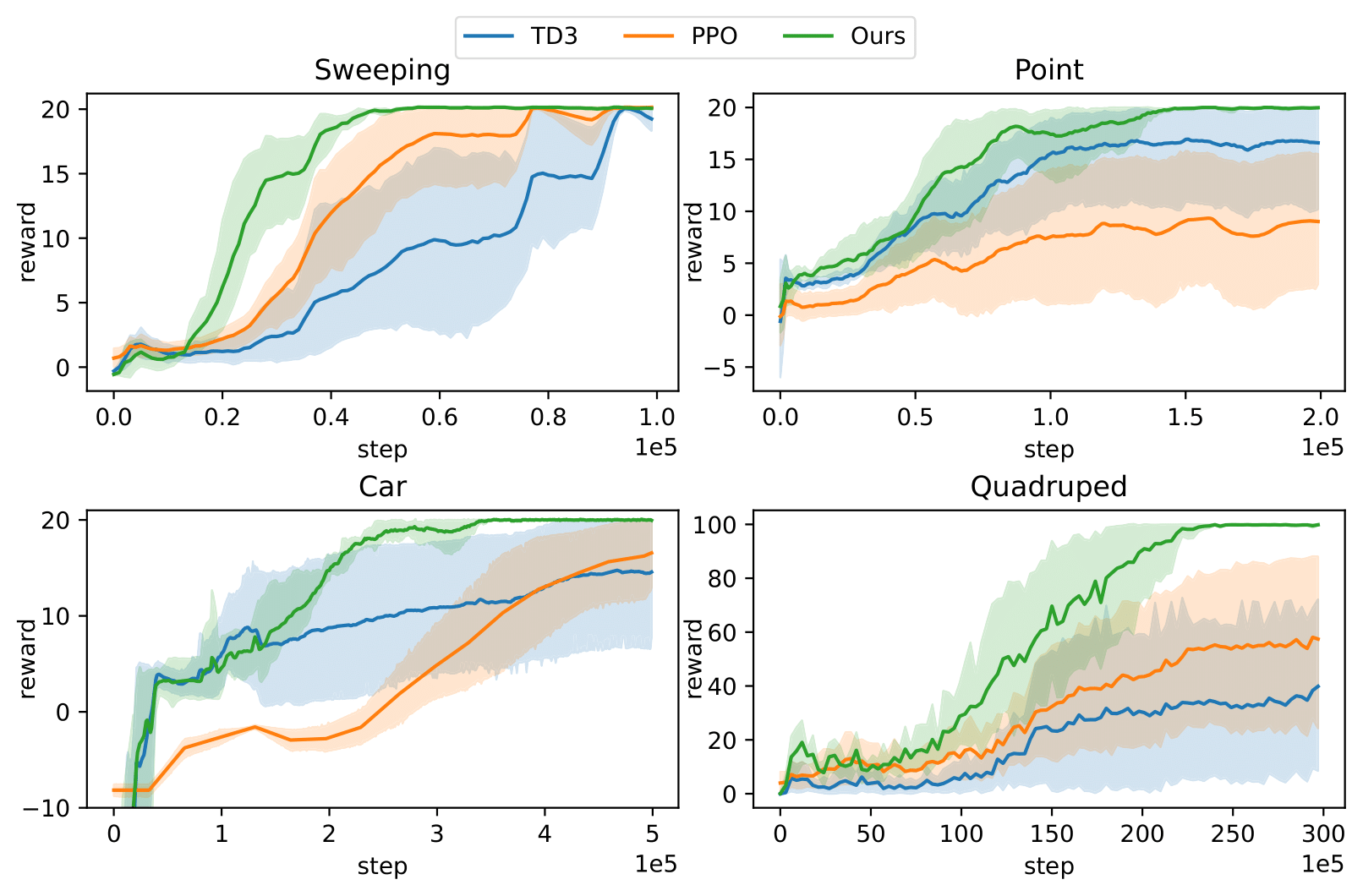}
  \caption{Training performance depicting total accumulated rewards over number of interaction steps.}
  \label{fig:training_perf}
  \vspace{-5px}
\end{figure}

Fig. \ref{fig:training_perf} compares the training reward between our co-learning algorithm with the off-policy TD3~\cite{td3} and on-policy PPO~\cite{ppo}. Our co-learning algorithm can reach higher reward in fewer simulation steps, while it is also more stable after the reward converges on all four robots. The experiments show that the training signal from the Lyapunov Q function critic can benefit the controller's training process.

\begin{table}[!htp]\centering
  \caption{ Eq.~\eqref{eq:positive}~\eqref{eq:lie_derivative} Satisfaction Rate of NLF}\label{tab:nlf_perf}
  \begin{tabular}{c|cccc}\toprule
    Phase                                              & \textit{Sweeping} & \textit{Point}    & \textit{Car}      & \textit{Quadruped} \\\midrule
    Co-learn                                           & \textbf{99.67 \%} & \textbf{98.97 \%} & \textbf{97.06 \%} & \textbf{97.59 \%}  \\
    Post-learn~\cite{chang2020neural, xiong2022hisarl} & 99.42\%           & 96.77 \%          & 94.54\%           & 93.47 \%           \\
    \bottomrule
  \end{tabular}
\end{table}

The quality of NLF is measured with Eq.~\eqref{eq:origin}~\eqref{eq:positive}~\eqref{eq:lie_derivative}. For Eq.~\eqref{eq:origin}, all the sinks have $|V(s_o)| < 10^{-3}$ on the four robots. We generated 100,000 sampled state transitions $(s_t, s_{t+1})$ for each robot. These transitions are generated with the trained controller $\pi$. We evaluated these transitions with Eq.~\eqref{eq:positive}~\eqref{eq:lie_derivative}. The results are reported in Table~\ref{tab:nlf_perf}. For any transition, $(s_t, s_{t+1})$, Eq.~\eqref{eq:positive} requires that $V(s_t)$ and $V(s_{t+1})$ should be greater than 0. Eq.~\eqref{eq:lie_derivative} requires that the lie derivative should be smaller than 0. A desired NLF should make all the transitions sampled from $\pi$ satisfy Eq.~\eqref{eq:positive}~\eqref{eq:lie_derivative}. We compared the NLF trained by our co-learning algorithm with the NLF trained in the post-learning phase~\cite{chang2020neural, xiong2022hisarl}. Because the co-learning algorithm can adapt the controller to the NLF during training, the NLFs trained by the co-learning algorithm have better quality with respect to satisfaction rate.

\subsection{Baselines}

Our approach explicitly avoids a robot entering a hazardous zone through the use of a runtime monitor. We compare two other existing model-free approaches that only implicitly encode avoidance behavior. We further evaluate these two algorithms with the RRT guidance for a fair comparison.

\paragraph{End-to-End (E2E)}
These RL controllers are trained with TD3 \cite{td3}.  Here safety violations are encoded into the reward via a penalty term. The reward at time $t$ is given by $r^{e2e}_t = r_t - C \mathds{1}_{haz}$, where $r_t$ is defined in Eq.~\eqref{eq:controller_reward}, and $\mathds{1}_{haz}$ is an indicator that shows whether a robot is in the hazardous zone. $C \in \mathbb{R}^+$ is a hyperparameter requiring tuning. The E2E is not guided by a planner and cannot avoid hazardous zones by following the planned path. Hence, we need to augment its observations with a vector $v_{haz} \in \mathbb{R}^{16}$ that provides position information of hazardous zones, allowing the controller to learn to avoid them. $v_{haz}$ contains $8$ vectors that point toward the $8$ closest hazardous zones. The input of the E2E controller is $[v_{haz}, s_t]$.

\paragraph{Constrained Policy Optimization}
Constrained Policy Optimization (CPO) \cite{cpo} is a trust-region method for solving the constrained MDP problem. By constraining safety violations, \cite{cpo} trains controllers in Safety Gym environments. However, the number of hazardous zones and map size described in the original CPO paper is limited compared with our approach. We observe that CPO can result in increasingly conservative behaviors as the number of hazardous zones and map sizes increase. CPO also needs to learn avoidance behaviors from information on hazardous zones directly. Thus, the input of the CPO controller is also $[v_{haz}, s_t]$.

\paragraph{Integrating Trained Controller with Planning}
Both the E2E and CPO are trained in the goal-conditioned state space, and as a consequence, we can also guide them with a sequence of waypoints to reach the final goal. We use the same RRT algorithm to generate the waypoints to guide the controllers trained with E2E and CPO. All the algorithms combined with the RRT planner are named H-XYZ, where XYZ is the algorithm used to train the controller.

\subsection{Safety, Reach Rate, and Performance}

\begin{table*}[ht]
  \centering
  \vspace{2px}
  \caption{Safety Violation and Reach Rate}
  E2E : End-to-End, CPO : Constrained Policy Optimization \cite{cpo}, H: Hierarchical Planner\\
  Safety violation and reach rate are fraction of 100 evaluation episodes with collisions and goal reachability, respectively. \\
  \label{tab:safety_and_reach_rate}
  \scriptsize
  \begin{tabular}{lc|ccccc|cccccc}\toprule
    \multirow{2}{*}{Difficulty} & \multirow{2}{*}{Robot} & \multicolumn{5}{c|}{Safety Violation $\downarrow$} & \multicolumn{5}{c}{Reach Rate $\uparrow$}                                                                               \\\cmidrule{3-12}
                                &                        & E2E                                  & CPO                            & H-E2E & H-CPO & H-Lyapunov    & E2E  & CPO  & H-E2E & H-CPO & H-Lyapunov    \\\midrule
    \multirow{4}{*}{level-1}    & \textit{Sweeping }     & 0.13                                 & 0.05                           & 0.27  & 0.32  & \textbf{0.01} & 0.87 & 0.75 & 0.73  & 0.68  & \textbf{0.99} \\
                                & \textit{Point    }     & 0.66                                 & 0.05                           & 0.08  & 0.21  & \textbf{0.04} & 0.25 & 0.07 & 0.72  & 0     & \textbf{0.96} \\
                                & \textit{Car      }     & 0.27                                 & 0.19                           & 0.41  & 0.18  & \textbf{0.00} & 0.73 & 0    & 0.59  & 0.22  & \textbf{1}    \\
                                & \textit{Quadruped}     & 0.44                                 & 0.09                           & 0.18  & 0.36  & \textbf{0.05} & 0.44 & 0    & 0.77  & 0.04  & \textbf{0.95} \\ \midrule
    \multirow{4}{*}{level-2}    & \textit{Sweeping }     & 0.51                                 & 0.12                           & 0.67  & 0.51  & \textbf{0.01} & 0.49 & 0.48 & 0.33  & 0.49  & \textbf{0.99} \\
                                & \textit{Point    }     & 0.42                                 & 0.1                            & 0.33  & 0.25  & \textbf{0.05} & 0.02 & 0    & 0.41  & 0     & \textbf{0.95} \\
                                & \textit{Car      }     & 0.61                                 & 0.21                           & 0.82  & 0.36  & \textbf{0.00} & 0.26 & 0    & 0.18  & 0.12  & \textbf{1}    \\
                                & \textit{Quadruped}     & 0.04                                 & 0.06                           & 0.35  & 0.53  & \textbf{0.04} & 0    & 0    & 0.41  & 0     & \textbf{0.96} \\ \midrule
    \multirow{4}{*}{level-3}    & \textit{Sweeping }     & 0.88                                 & 0.23                           & 0.83  & 0.84  & \textbf{0.03} & 0.12 & 0.17 & 0.15  & 0.14  & \textbf{0.97} \\
                                & \textit{Point    }     & 0.09                                 & 0.12                           & 0.45  & 0.33  & \textbf{0.06} & 0    & 0    & 0.09  & 0     & \textbf{0.94} \\
                                & \textit{Car      }     & 0.79                                 & 0.28                           & 0.98  & 0.38  & \textbf{0.02} & 0.02 & 0    & 0.02  & 0     & \textbf{0.98} \\
                                & \textit{Quadruped}     & 0.09                                 & 0.09                           & 0.44  & 0.59  & \textbf{0.08} & 0    & 0    & 0.1   & 0     & \textbf{0.92} \\
    \bottomrule
  \end{tabular}
  \vspace{-13px}
\end{table*}

We evaluate the trained controllers for the four robots on all difficulty levels. The max simulation steps for level-1 is 1,000; level-2 and level-3 are limited to 4,000 and 16,000 steps before termination, resp. When a robot enters any hazardous zone, we terminate the simulation immediately and report a safety violation.

By comparing the fraction of episodes with safety violations on deployment (Table \ref{tab:safety_and_reach_rate}), we observe a clear safety and reach rate benefit of our approach. In most cases, the safety violation ratios of all these algorithms are significantly higher than our approach. However, there also exist cases with low safety violations on other algorithms. Nevertheless, one should note that a low violation rate is independent of the policy's performance and goal reach rate. A low safety violation rate may also mean a non-performant or inactive controller. For example, the \textit{Quadruped} E2E controller under level-2 has a $0.04$ safety violation ratio, while it also cannot reach the goal in all the 100 episodes simulations. The \textit{point} H-E2E controller has a $0.08$ safety violation ratio with $0.72$ reach rate under level-1. However, as difficulty increases, its safety violation ratio dramatically increases, and its reach rate decreases as well. Our method is safer and has a significantly greater goal-reach rate than any of the baselines.

Although combining RRT with the controller can significantly benefit certain scenarios (e.g., for the E2E \textit{point} controller under level-1, the reach rate boosts from 0.25 to 0.72), Table~\ref{tab:safety_and_reach_rate} also shows that this is not a generally exploitable principle. Guided by the waypoints generated by RRT, the robot may have to visit more places before reaching the goal. These additional explorations can expose the robot to more hazardous zones. For example, for all the CPO \textit{Quadruped} controllers, the safety violation ratio increases significantly after combining them with RRT.

\begin{table}[ht]\centering
  \caption{Average Number of Steps to Reach Goal}\label{tab:performance}
  '-' indicates complete failure in task achievement.
  \scriptsize
  \begin{tabular}{lc|cccccc}\toprule
    \multirow{2}{*}{Difficulty} & \multirow{2}{*}{Robot} & \multicolumn{5}{c}{Steps to Reach}                                        \\\cmidrule{3-7}
                   level             &                        & E2E                                & CPO   & H-E2E  & H-CPO  & H-Lyapunov \\\midrule
    \multirow{4}{*}{1}    & \textit{Sweeping }     & 120.7                              & 121.4 & 201.7  & 192.1  & 234.7      \\
                                & \textit{Point    }     & 403.6                              & 793.8 & 409.6  & -      & 521.9      \\
                                & \textit{Car      }     & 381.1                              & -     & 444.2  & 765.5  & 417.0      \\
                                & \textit{Quadruped}     & 99.4                               & -     & 117.7  & 478.2  & 252.0      \\ \midrule
    \multirow{4}{*}{2}    & \textit{Sweeping }     & 325.8                              & 298.7 & 505.9  & 503.7  & 584.4      \\
                                & \textit{Point    }     & 528.0                              & -     & 887.1  & -      & 1015.2     \\
                                & \textit{Car      }     & 720.7                              & -     & 858.7  & 2297.6 & 812.1      \\
                                & \textit{Quadruped}     & -                                  & -     & 242.1  & -      & 500.5      \\ \midrule
    \multirow{4}{*}{3}    & \textit{Sweeping }     & 764.0                              & 675.4 & 1080.9 & 1097.9 & 1226.9     \\
                                & \textit{Point    }     & -                                  & -     & 1498.4 & -      & 1948.7     \\
                                & \textit{Car      }     & 1212.5                             & -     & 1678.0 & -      & 1573.8     \\
                                & \textit{Quadruped}     & -                                  & -     & 504.9  & -      & 880.5      \\
    \bottomrule
  \end{tabular}
\end{table}

We evaluate the performance of algorithms with the steps-to-goal in Table~\ref{tab:performance}. While our method yields an enhanced safety and goal reach rate, we pay the price of having a larger mean of the number of steps to reach in most environments. This is expected as a more reckless controller may reach the goal faster, but this would be at the cost of safety. Our algorithm has larger steps to reach because of the small RoA over-approximations when a robot passes narrow tunnels. These small over-approximations cause cautious behaviors and make the robot move slower than usual. We also find that after combining with RRT, the robot needs more steps to reach the goal when compared with its non-hierarchical counterpart. This is normal because the paths generated with RRT are not guaranteed to be the shortest. This problem can be alleviated with algorithms like $RRT^*$~\cite{lavalle2006planning}. However, since this paper only focuses on safety when a robot follows a planned path, the choice of planning algorithm is not our primary focus, and we leave this question for future work.

\section{Related Work}

Planning under kinodynamic constraints~\cite{donald1993kinodynamic} addresses the inconsistency problem between planning and control. However, most previous works~(\cite{tedrake2010lqr,li2016asymptotically,sakcak2019sampling,xie2015toward}) suffer from issues in scalability and generalization. Notably, compared with our work, \cite{tedrake2010lqr} has a similar idea that combines stability region and planning. However, this approach only works for linearized known dynamics, and computing the LQR-tree is computationally expensive for deployment. More recently, \cite{li2021mpc} introduces a learning-based path planner and follows the planned path with MPC solved by Cross-Entropy Method (CEM)~\cite{botev2013cross}. However, explicitly providing safety assurance is still challenging because the CEM avoids potential collisions via optimizing penalty terms encoded implicitly.

Integrating path planning with a DRL controller has been extensively explored recently (\cite{faust2018prm,chiang2019rl,ota2020efficient,chiang2019learning}). For instance, ~\cite{faust2018prm} and \cite{chiang2019rl} train a local point-to-point controller with DRL and employ probabilistic roadmaps and rapidly-exploring random trees as the high-level planner, respectively. \cite{ota2020efficient} applies a convolutional neural network planner and trains the reach-avoidance controller with DRL. \cite{chiang2019learning} learns end-to-end point-to-point and path-following navigation behaviors with DRL while it also enhances the DRL with an automatic parameter tuner. These approaches can work with unknown dynamics and high-dimensional raw sensor observations. However, the desired behaviors of these controllers are only specified in the reward functions and also inherit the limitations of standard RL. 



Safe reinforcement learning approaches~(\cite{cpo,chow2017risk,chow2018lyapunov,berkenkamp2017safe,cheng2019end,zhu2019inductive}) aim at learning controllers that causes limited constraint violations.  \cite{cpo, chow2017risk, chow2018lyapunov} can reduce violation numbers but also results in more conservative behaviors, which can affect performance. \cite{berkenkamp2017safe,cheng2019end,zhu2019inductive} can avoid all violations.  However, they require a safe controller and the dynamics to be known, which is a strong requirement compared to our model-free setting. Moreover, since these approaches are not integrated with any planner, they usually do not perform well for long-time-horizon navigation tasks.

The Lyapunov method is also applied in \cite{chow2018lyapunov,berkenkamp2017safe,cheng2019end,zhu2019inductive}. However, \cite{chow2018lyapunov} does not build an explicit Lyapunov function.  \cite{cheng2019end,zhu2019inductive} compute the Lyapunov function with analytic approaches instead of learning it. Hence, they suffer from scalability issues and require knowing the controller's dynamics. In contrast, modern approaches~\cite{chang2020neural,sun2020learning,petridis2006construction,richards2018lyapunov} models the Lyapunov function with a neural network, training it with the transitions sampled from the environment and controller. \cite{chang2020neural, petridis2006construction,richards2018lyapunov} build the NLF with a fixed controller, and thus unlike our work, cannot be used to improve the quality of the controller. \cite{sun2020learning} also co-learns the Lyapunov function and controller, but with supervised learning, requiring a training dataset in advance. 

\section{Conclusion}

In this paper, we firstly introduced the TNLF and subsequently, using information derived by modeling this function for a DRL controller, demonstrate a clear enhancement to the aspects of safety when following plans. Additionally, we prove that learning these controllers along with the TNLF can be done via a novel co-learning procedure yielding benefits to both components. We show that general planning algorithms albeit capable, are in themselves inherently lacking in their ability to avoid safety violations, and the execution strategies for these plans must also take into account the capabilities of the underlying controller. Our use of RoAs and runtime monitors is a significant leap towards realizing this synergy between safety and planning.

\bibliographystyle{IEEEtran}
\typeout{}
\bibliography{refs}

\end{document}